\documentclass[lettersize,journal]{IEEEtran}
\usepackage{cite}
\usepackage{amsmath,amssymb,amsfonts}
\usepackage{graphicx}
\usepackage{textcomp}
\usepackage{tabu}
\usepackage{booktabs}
\usepackage{multirow}
\usepackage{amsmath}
\usepackage{epstopdf}
\usepackage{pbox}
\usepackage{algorithm}
\usepackage{CJK}
\usepackage{mathtools}%

\newtheorem{theorem}{\textbf{Theorem}}

\newcounter{step}
\renewcommand{\thestep}{S\arabic{step}}

\newcommand{\skipthis}[1]{ }

\usepackage{color}
\usepackage[short]{optidef}
\usepackage[compatible]{algpseudocode}
\usepackage{flushend}

\newtheorem{proposition}{\textbf{Proposition}}

\newtheorem{remark}{\textbf{Remark}}
\newenvironment{proof}{\noindent{\em Proof:\/}}{\hfill $\blacksquare$\par}

\newcommand{\argmin}{\operatorname*{arg\,min}}

\begin{document}

\title{Real-Time Auto-Optimization in Unknown Environments via \\ Structure-Exploiting Dual Control for Exploration and Exploitation}

\author{Shiying Dong,
Haoyang Yang,
Qiwei Liu,
Wen-Hua Chen, \emph{IEEE Fellow}
\thanks{This work is supported by the Research Centre for Low Altitude Economy (RCLAE), the Hong Kong Polytechnic University, and in part by the Smart Traffic Fund of the Transport Department of the Hong Kong Special Administrative Region under Project PSRI/131/2512/PR. (Corresponding author: Wen-Hua Chen)}
\thanks{S. Dong, H. Yang, Q. Liu, and W. H. Chen are with the Department of Aeronautical and Aviation Engineering and Research Centre for Low Altitude Economy (RCLAE), the Hong Kong Polytechnic University, Hong Kong, China. (\{shiying.dong, haoyang.yang, qiwei-rclae.liu, wenhua.chen\}@polyu.edu.hk)}
}

\maketitle

\begin{abstract}
This paper develops a fast numerical dual control for exploration and exploitation (DCEE) method to address auto-optimization problems in unknown environments. 
In auto-optimization problems, the optimal operating condition is unknown a priori and may vary with the environment. 
As in classical dual control techniques, computational burden remains a major concern in DCEE for active learning.
Existing DCEE methods provide a principled exploration-exploitation objective, but mainly realized through standard optimization packages or explicit gradient-type update laws, where the numerical structure of the DCEE has not been fully exploited. 
This paper shows that the reward function in DCEE has an inherent convex-over-nonlinear structure, where the exploitation and exploration terms form a unified nonlinear residual map equipped with a convex outer loss. 
Benefiting from this structure, a structure-exploiting numerical method is developed by linearizing only the nonlinear residual map while preserving the convex outer loss. 
Thus, each subproblem is transformed into a structured convex form that can be solved reliably.
The resulting generalized Gauss-Newton Hessian approximation is positive semidefinite and depends only on first-order derivatives, thereby supporting fast online computation.
The proposed method is evaluated on a vehicle cruising auto-optimization problem and compared with existing methods.
Simulation and hardware-in-the-loop experimental results show that the proposed method improves control performance and achieves a speedup of approximately one order of magnitude, with a microsecond-level maximum computation time of only \(83~\mu\mathrm{s}\) on a typical vehicle embedded CPU.
\end{abstract}

\begin{IEEEkeywords}
Auto-optimization, dual control of exploration and exploitation, active learning, convex-over-nonlinear, sequential convex programming.
\end{IEEEkeywords}


\section{Introduction}

Autonomous systems are increasingly required to operate in environments where the optimal operating conditions are unknown, uncertain, and time-varying \cite{annaswamy2024control}. 
In autonomous vehicle eco-cruising, unknown road-load variations can shift the speed-force operating point and change the energy-efficient cruising speed~\cite{sciarretta2020energy,han2019fundamentals,zhang2023energy}.
This situation also appears in applications such as autonomous source seeking \cite{hutchinson2018information,rhodes2022autonomous}, motor efficiency optimization \cite{zhu2025adaptive},  energy system optimization \cite{tang2026auto}, and motion control \cite{sun2025adaptive}, where the system must continuously adjust its behavior as the surrounding environment changes. 
Therefore, auto-optimization control aims to drive the system to an unknown optimal operating state while updating this state through interaction with the environment.

The key difficulty is that the controller is not given a reference to track.
Set-point tracking model predictive control (MPC)~\cite{rawlings2017model} has been widely studied for constrained optimal control and has been applied in industrial process control, autonomous driving, and robotic navigation. 
Classical tracking MPC usually assumes that the reference, model parameters, or performance map are available before operation, and then computes control inputs to track the prescribed set-point while satisfying system constraints \cite{reiter2026synthesis}.
However, in many auto-optimization problems, the desired set-point is not known in advance.
It depends on unknown or time-varying environmental parameters and must be estimated from online measurements during operation.
Adaptive, learning-based, and active-learning MPC \cite{saviolo2023active} address this problem by using operational data to update uncertain models or parameters, while self-optimizing MPC further introduces future data and excitation mechanisms to balance optimality tracking and parameter identification~\cite{tan2025auto}. 
Reinforcement learning provides another route through repeated interaction or offline training, but for real-time systems with limited data and a need for interpretability, control-theoretic online optimization remains attractive~\cite{chen2022perspective}.
 
For auto-optimization, dual control for exploration and exploitation (DCEE) provides a way to move the system toward an unknown optimal operating condition while actively learning the reward environment online~\cite{li2025dual,yu2026k}.
DCEE was originally developed for autonomous search problems in unknown environments, where the control input affects both the motion of the search agent and the information collection used for source estimation~\cite{chen2021dual}. 
Subsequent studies reduced the computational burden of particle-filter-based implementations by introducing concurrent active learning with multiple estimators~\cite{li2022concurrent}. 
Multi-step DCEE further incorporates prediction over a finite horizon and establishes recursive feasibility and convergence properties~\cite{tan2024multistep}, while Nesterov-accelerated DCEE improves computational efficiency~\cite{tan2025dual}. 
Cooperative active-learning DCEE extends this idea to multi-agent search problems~\cite{li2024cooperative}, and recent applications show that the relevance of DCEE has gone beyond the search domain, including robotic assembly~\cite{pashupathy2026dual} and autonomous sensorless control under operational uncertainty~\cite{zhang2026autonomous}.

Despite these remarkable advances, the computational burden of dual control remains a fundamental challenge. 
Classical dual control explicitly accounts for the fact that the control input has both a directing effect on the system state and a probing effect on future uncertainty, which typically leads to nonlinear and belief-state-dependent optimization problems \cite{wittenmark2008adaptive}. 
Dual control has long been regarded as computationally intractable even for moderately sized systems, and extensive research has been devoted to suboptimal and approximate solutions \cite{filatov2000survey,wittenmark2008adaptive}. 
The heavy computational burden has also been explicitly recognized in recent DCEE studies, where reducing online computational load is identified as an important research direction~\cite{chen2022perspective}.
To address this issue, recently explicit gradient-type control laws, rather than directly solving the online nonlinear optimization problem involved in DCEE, have been proposed in~\cite{li2025dual,yu2026k}. 
They have low computational cost and are convenient for analysis, but they do not fully exploit the numerical structure of the optimization problem involved in DCEE and the performance may be significantly degraded due to the loss of optimality. 
In this approach, the reward function in DCEE is usually differentiated to obtain an update direction, rather than being directly solved in the physical input space as a structured nonlinear optimization problem. 
Therefore, the nonlinear dependence of the predicted output, the predicted active-learning update, and the predicted optimal operating condition on the candidate input is only partially reflected in the implemented control law. 

Modern numerical optimal control has shown that exploiting problem structure is crucial for real-time implementation. 
In particular, sequential convex programming (SCP), Gauss-Newton (GN) methods, generalized Gauss-Newton (GGN) Hessian approximations, and structure-preserving MPC formulations can transform complex nonlinear programs into efficient local convex or least-squares subproblems~\cite{messerer2021survey,diehl2019local}.
Inspired by the recent advances in this field, this paper aims to develop a structure-exploiting numerical reformulation of DCEE that could significantly reduce the computational burden while preserving the optimality of the original DCEE.
The key observation of this paper is that the DCEE has an inherent convex-over-nonlinear structure. 
By stacking the exploitation residual and the ensemble uncertainty residual, the DCEE can be written as a composition of a convex quadratic outer loss function and a nonlinear residual mapping.  
Based on this structure, we propose a direct numerical DCEE method which can provide a fast and accurate solution. 

The main contributions of this paper are summarized as follows:
\begin{itemize}

    \item We identify an inherent convex-over-nonlinear structure of DCEE, where the exploitation and exploration terms are stacked into a unified nonlinear residual map with a convex quadratic outer loss.
    
    \item Benefiting from the numerical structure of the DCEE, this paper introduces a structure-exploiting SCP method for numerically solving the DCEE problem. 
    By linearizing only the nonlinear DCEE residual mapping while preserving the convex outer loss, we show that the sequential convex approximation of the DCEE problem reduces exactly to a Gauss-Newton least-squares step. 
    Therefore, each inner iteration solves a structured convex subproblem rather than an arbitrary nonconvex approximation.
    
    \item We further analyze the proposed structure-exploiting SCP-DCEE method from the generalized Gauss-Newton viewpoint. 
    The Hessian approximation of each subproblem is positive semidefinite, requires only first-order derivatives of the DCEE residual mapping, and avoids second-order derivatives, thereby supporting fast and accurate numerical solution.

\end{itemize}
The proposed method is validated on a vehicle cruising auto-optimization problem. 
The closed-loop behavior is compared with classical DCEE~\cite{li2025dual} and extremum seeking control, and the results show that the proposed method achieves better transient response, lower cumulative tracking error, and smaller cumulative regret. 
The computational performance is further compared with a direct nonlinear programming implementation using \texttt{IPOPT} \cite{wachter2006implementation} and \texttt{CasADi} \cite{andersson2018casadi}, demonstrating the advantage of exploiting the numerical structure. 
In addition, its real-time feasibility is evaluated through hardware-in-the-loop (HiL) experiments.
The results show that the proposed method achieves a microsecond-level maximum computation time of only \(83~\mu\mathrm{s}\) on a typical vehicle embedded CPU.

The remainder of this paper is organized as follows. 
Section II formulates the DCEE auto-optimization problem. 
Section III reveals the convex-over-nonlinear-residual structure. 
Section IV develops the structure-exploiting numerical methods. 
Section V presents the simulation and HiL experimental results. 
Section VI concludes the paper.


\section{Problem Statement}
\label{sec:problem_statement}

This section formulates the DCEE problem for auto-optimizing control in unknown environments. 
Following the standard DCEE framework, the auto-optimization objective is first rewritten as a dual control objective that embeds both exploitation and exploration effects. 
Then, an ensemble-learning-based active learning mechanism is introduced to approximate the conditional mean and uncertainty of the unknown optimal operating condition.


\subsection{Dual Control Reformulation}
\label{subsec:dual_control_reformulation}

Consider a system operating in an unknown environment. 
The effect of the environment on the performance is represented by an unknown parameter $\theta^\ast\in\mathbb R^{n_\theta}$.
For any candidate parameter \(\theta\), the reward associated with output
\(y\in\mathbb R^{n_y}\) is modeled as
\begin{equation}
\mathcal R(\theta,y)
=
\psi(y)^\top \theta+\psi_0(y),
\label{eq:reward_model}
\end{equation}
where \(\psi(\cdot)\) is a known basis function and \(\psi_0(\cdot)\) is a known offset term. 
The reward function induced by the true environment is therefore \(\mathcal R(\theta^\ast,y)\), but \(\theta^\ast\) is unknown and needs to be learned from online reward measurements. 
Without loss of generality, the desired operating condition is defined as the maximizer of the performance index \(\mathcal R\). 
The goal of auto-optimization is to guide the system toward this unknown optimum and to track the optimal condition when the environment changes.

The system dynamics are described by
\begin{equation}
x_{k+1}=f(x_k, u_k), \quad y_k=Cx_k,
\label{eq:plant}
\end{equation}
where \(x_k\in\mathbb R^{n_x}\), \(u_k\in\mathbb R^{n_u}\). 
At each time \(k\), the system output and the reward can be measured, subject to measurement noise:
\begin{equation}
\mathcal R_k = \mathcal R(\theta_k,y_k)+\varepsilon_k,
\label{eq:measured_reward}
\end{equation}
where \(\varepsilon_k\) denotes measurement noise.
The measurement vector is defined as $z_k=[y_k, \mathcal R_k]$.
The instantaneous information state at time step \(k\) is given by
\begin{equation}
\mathcal I_k:=\{u_{k-1},z_k\},
\label{eq:instantaneous_information_state}
\end{equation}
which consists of the previously applied control input \(u_{k-1}\) and the current measurement vector \(z_k\).
The collection of all information available up to time step \(k\) is denoted by
\begin{equation}
\mathcal H_k=\{\mathcal I_0, \mathcal I_1,\ldots, \mathcal I_k\},
\label{eq:information_history}
\end{equation}
with the initial information defined as $\mathcal I_0=z_0$.

For a candidate input \(u\), the one-step predicted state and output are
\begin{equation}
x_{k+1|k}(u)=f(x_k, u), \quad y_{k+1|k}(u)=Cx_{k+1|k}(u).
\label{eq:prediction}
\end{equation}
The candidate future information packet is denoted by
\begin{equation}
\mathcal I_{k+1|k}(u) := \{u,  y_{k+1|k}(u),\mathcal R_{k+1|k}(u)\},
\end{equation}
and the candidate information set is $\mathcal H_{k+1|k}(u) := \{\mathcal H_k,\mathcal I_{k+1|k}(u)\}$.
The explicit dependency on \(u\) emphasizes that candidate control inputs influence the predicted output and the predictive information available for learning. 
In the following paper, this dependency will be omitted where there is no ambiguity.

As introduced in \cite{li2025dual}, the auto-optimization problem admits two DCEE-based formulations.
The first formulation is close to extremum seeking control. 
Given the prior and the measurement history up to time \(k\), the control input is chosen to maximize the predicted reward
\begin{equation}
u_k^\ast \in \arg\max_{u\in\mathbb R^{n_u}} \; \mathbb E \left[ \mathcal R(\theta^\ast,y_{k+1|k}) \middle| \mathcal H_{k+1|k} \right].
\label{eq:reward_max_dcee_form}
\end{equation}

The second formulation does not optimize the reward value directly. 
It instead drives the system output toward the unknown optimal operating condition. 
Because the true optimum is unavailable, the controller tracks its best prediction based on the information available at the decision time.
Then the control input is obtained from
\begin{equation}
u_k^\ast \in \arg\min_{u\in\mathbb R^{n_u}} \; \mathbb E \left[ \left\| y_{k+1|k}-\gamma^\ast \right\|^2 \middle| \mathcal H_{k+1|k} \right].
\label{eq:tracking_dcee_form}
\end{equation}
Here, $\Gamma:\mathbb R^{n_\theta}\rightarrow\mathbb R^{n_y}, \ \gamma^\ast=\Gamma(\theta^\ast)$ gives the predicted optimal operating condition associated with the environment parameter \(\theta^\ast\). 
We assume that, for each \(\theta\) in the region of interest, the performance objective admits a unique local maximizer, denoted by \(\Gamma(\theta)\), and that \(\Gamma(\cdot)\) is smooth.

This paper is centered on the second formulation in \eqref{eq:tracking_dcee_form}. 
In this setting, the optimal operating condition is not prescribed in advance, but is induced by the unknown environment parameters and must therefore be inferred during operation.

Before proceeding to the numerical solution, we show that \eqref{eq:tracking_dcee_form} naturally contains dual effects. 
Define
\begin{equation}
D_k := \mathbb E \left[ \left\| y_{k+1|k}-\gamma^\ast \right\|^2 \middle| \mathcal H_{k+1|k} \right].
\label{eq:ideal_dcee}
\end{equation}
The predicted nominal optimal operating condition is
\begin{equation}
\bar\gamma_{k+1|k} := \mathbb E \left[ \gamma_{k+1|k} \middle| \mathcal H_{k+1|k} \right],
\label{eq:conditional_gamma_mean}
\end{equation}
and the corresponding prediction error is
\begin{equation}
\tilde\gamma_{k+1|k} := \gamma^\ast-\bar\gamma_{k+1|k}.
\label{eq:conditional_gamma_error}
\end{equation}

Substituting $\gamma^\ast = \bar\gamma_{k+1|k} + \tilde\gamma_{k+1|k}$ into \eqref{eq:ideal_dcee} gives
\begin{align}
D_k & = \mathbb E \left[ \left\| y_{k+1|k} - \bar\gamma_{k+1|k} - \tilde\gamma_{k+1|k} \right\|^2 \right] \nonumber\\
    & = \mathbb E \left[ \left\| y_{k+1|k} - \bar\gamma_{k+1|k} \right\|^2  \right] \nonumber\\
    & - 2\mathbb E \left[ \left( y_{k+1|k} - \bar\gamma_{k+1|k} \right)^\top \tilde\gamma_{k+1|k}  \right] \nonumber\\
    & + \mathbb E \left[ \left\| \tilde\gamma_{k+1|k} \right\|^2  \right].
\label{eq:dcee_expansion}
\end{align}
By definition $\mathbb E \left[ \tilde\gamma_{k+1|k}  \right]=0$, the cross term vanishes. 
Hence,
\begin{equation}
D_k = \underbrace{ \mathbb E \left[ \left\| y_{k+1|k} - \bar\gamma_{k+1|k} \right\|^2 \right] }_{\text{exploitation}}
+ \underbrace{ \mathbb E \left[ \left\| \tilde\gamma_{k+1|k} \right\|^2 \right] }_{\text{exploration}}.
\label{eq:dcee_dual_decomposition_expectation}  
\end{equation}
The first term in \eqref{eq:dcee_dual_decomposition_expectation} drives the predicted output toward the predicted nominal optimal operating condition and therefore corresponds to exploitation or optimality tracking. 
The second term measures the uncertainty of the predicted optimal operating condition and therefore corresponds to exploration. 
The candidate input \(u\) affects both terms: it changes the predicted output through the system dynamics and changes the predicted information set through the reward observation. 
Thus, the dual effect is embedded in the expected tracking objective itself.


\subsection{Ensemble-Based Active Learning}
\label{subsec:ensemble_based_active_learning}
The unknown parameters can be updated online by efficient gradient-based algorithms.
In practice, however, a single-estimator optimization scheme is often sensitive to noisy measurements and nonlinear model errors.
Following  \cite{li2025dual}, this paper adopts a multi-estimator learning scheme for parameter adaptation.
Instead of relying on one parameter estimate, the proposed method propagates a set of estimators to represent the current uncertainty of the unknown environment.

An ensemble of \(N\) estimators is used to describe the current belief of the unknown environment, \(\theta_k^i, \ i=1,\ldots,N,\) with
\begin{equation}
\Theta_k := \operatorname{col}(\theta_k^1,\ldots,\theta_k^N), \quad \bar\theta_k := \frac1N\sum_{i=1}^N\theta_k^i.
\label{eq:theta_ensemble}
\end{equation}
For a given candidate input \(u\), each estimator gives a one-step prediction of the corresponding optimal operating condition:
\begin{equation}
\hat\gamma_{k+1|k}^i = \Gamma(\hat\theta_{k+1|k}^i),
\label{eq:predicted_gamma_i}
\end{equation}
where \(\hat\theta_{k+1|k}^i\) is the predicted value of the \(i\)-th estimator induced by the candidate input.

The nominal prediction of the optimal operating condition is then obtained by averaging the ensemble predictions,
\begin{equation}
\bar\gamma_{k+1|k} = \frac1N \sum_{i=1}^{N} \hat\gamma_{k+1|k}^i,
\label{eq:ensemble_gamma_mean}
\end{equation}
and the deviation of each ensemble member from this nominal prediction is defined as
\begin{equation}
\tilde\gamma_{k+1|k}^i = \hat\gamma_{k+1|k}^i - \bar\gamma_{k+1|k}.
\label{eq:ensemble_gamma_deviation}
\end{equation}
The spread of the predicted optimal operating condition is approximated by the sample covariance
\begin{equation}
\Sigma_{\gamma,k+1|k} = \frac1N \sum_{i=1}^{N} \tilde\gamma_{k+1|k}^i \tilde\gamma_{k+1|k}^i\top .
\label{eq:ensemble_covariance}
\end{equation}

Using \eqref{eq:ensemble_gamma_mean}-\eqref{eq:ensemble_covariance} in \eqref{eq:dcee_dual_decomposition_expectation}, the reward function in DCEE can be evaluated directly from the ensemble as
\begin{equation}
D_k = \left\| y_{k+1|k} - \bar\gamma_{k+1|k} \right\|^2 + \frac1N \sum_{i=1}^{N} \left\| \tilde\gamma_{k+1|k}^i \right\|^2 .
\label{eq:implementable_strict_dcee}
\end{equation}
Equivalently,
\begin{equation}
D_k = \left\| y_{k+1|k} - \bar\gamma_{k+1|k} \right\|^2 + \operatorname{tr} \left( \Sigma_{\gamma,k+1|k} \right).
\label{eq:strict_dcee_objective}
\end{equation}

This gives the direct numerical DCEE problem studied in this paper
\begin{equation}
u_k^\star \in \argmin_{u\in\mathbb R^{n_u}} \; D_k(u).
\label{eq:direct_dcee_problem}
\end{equation}
Compared with the DCEE implementation based on an explicit gradient-type control law \cite{li2025dual}, the above formulation optimizes the reward function in DCEE directly in the physical input space.  
In this way, the dependence of both \(y_{k+1|k}(u)\) and \(\bar\gamma_{k+1|k}(u)\) on the candidate input \(u\) is retained in the numerical optimization problem.

The predicted estimator \(\hat\theta_{k+1|k}^i\) is computed as follows.
For the linear-in-parameter reward model in \eqref{eq:reward_model}, each estimator is updated from the measured reward by
\begin{equation}
\theta_{k+1}^i = \theta_k^i - \eta_i\psi(y_k) \left( \mathcal R(\theta_k^i,y_k) - \mathcal R_k \right), i=1,\ldots,N,
\label{eq:online_estimator_update}
\end{equation}
where \(\eta_i>0\) is the learning rate of the \(i\)-th estimator.

For a candidate input \(u\), the reward at the next step has not yet been measured.
It is therefore approximated using the current ensemble mean
\begin{equation}
\hat {\mathcal R}_{k+1|k} = \mathcal R(\bar\theta_k,y_{k+1|k}).
\label{eq:predicted_reward}
\end{equation}
Substitution of this predicted reward into the update rule yields
\begin{equation}
\hat\theta_{k+1|k}^i = \theta_k^i - \eta_i \psi(y_{k+1|k}) \left( \mathcal R(\theta_k^i,y_{k+1|k}) - \hat {\mathcal R}_{k+1|k} \right).
\label{eq:predicted_estimator_update}
\end{equation}

Through \eqref{eq:predicted_estimator_update}, the candidate input \(u\) affects the predicted output, the reward regression, and the next estimator values.
The predicted optimal operating conditions obtained from the ensemble therefore also depend on \(u\). 
Hence both \(\bar\gamma_{k+1|k}\) and \(\Sigma_{\gamma,k+1|k}\) in \eqref{eq:strict_dcee_objective} vary with the same decision variable.

\begin{remark}
The update rule in \eqref{eq:predicted_estimator_update} is not essential to the following proposed numerical formulation. 
It is one possible ensemble active-learning realization. 
Other estimator prediction maps can be used, provided that the candidate-input-induced maps $u\mapsto \hat\theta_{k+1|k}^i(u), \ u\mapsto \hat\gamma_{k+1|k}^i(u)$ are well-defined and differentiable on the region of interest.
\end{remark}


\section{Convex-over-Nonlinear Structure of DCEE}
\label{sec:con_structure_dcee}

This section presents the numerical structure behind the DCEE.
The ensemble-based DCEE  is rewritten only to clarify how the exploitation and exploration terms appear in the resulting optimization problem.
Then, the DCEE is not simply treated as a black-box nonlinear cost in the control input. 
It admits a convex-over-nonlinear form, which will be used in the next section to design the fast numerical solution method.


\subsection{Residual Representation}
\label{subsec:residual_representation}

Recall the reward function in DCEE
\begin{equation}
D_k(u) = \left\| y_{k+1|k}-\bar\gamma_{k+1|k} \right\|^2 + \frac1N \sum_{i=1}^N \left\| \tilde\gamma_{k+1|k}^i \right\|^2 .
\label{eq:Dk_strict_objective}
\end{equation}
The objective in \eqref{eq:Dk_strict_objective} can be written as the squared norm of a stacked residual vector. 
This form integrates the tracking term and the uncertainty term into a unified least-squares structure.

Define the exploitation residual as
\begin{equation}
r_{{\rm ex},k} := y_{k+1|k}-\bar\gamma_{k+1|k}, \label{eq:rex_dcee} 
\end{equation}
and define the uncertainty residual as
\begin{equation}
r_{{\rm unc},k} := \frac1{\sqrt N} \operatorname{col} \left( \tilde\gamma_{k+1|k}^1, \dots, \tilde\gamma_{k+1|k}^N
\right).
\label{eq:runc_dcee}
\end{equation}
The stacked DCEE residual map is then
\begin{equation}
F_k(u) := \operatorname{col} \left( r_{{\rm ex},k}, r_{{\rm unc},k} \right).
\label{eq:Rk_dcee}
\end{equation}
By construction,
\begin{equation}
D_k(u) = \|F_k(u)\|_2^2.
\label{eq:Dk_residual_norm}
\end{equation}

This representation is useful for two reasons. 
First, it shows that the exploitation and exploration components of DCEE share the same residual-based least-squares structure. 
Second, it isolates the nonlinear part of the problem inside the map \(F_k(u)\). 
The nonlinearity of \(F_k\) comes from the one-step system prediction, the candidate-input-induced ensemble update, and the optimal-operation map. 
The outer loss, however, is simply a squared norm.

For consistency with the standard least-squares and Gauss-Newton notation,
we introduce the scaled objective
\begin{equation}
L_{k}(u) := \frac12D_k(u) = \frac12\|F_k(u)\|_2^2.
\label{eq:f0k_def}
\end{equation}
The factor \(1/2\) does not affect the optimizer. 
It is introduced only to match the conventional notation used in least-squares and Gauss-Newton methods.


\subsection{Composite Convex-over-Nonlinear Form}
\label{subsec:composite_con_form}

At each sampling instant \(k\), the current state \(x_k\) and the current estimator ensemble \(\Theta_k\) are given by the measurements and the learning update. 
Hence, the reward function in DCEE is viewed as a parametric function of the candidate input \(u\), denoted by
\begin{equation}
L_k(u)=L(u;x_k,\Theta_k).
\end{equation}
In this sense, the convex-over-nonlinear structure below is stated with respect to the optimization variable \(u\), while \(x_k\) and \(\Theta_k\) are treated as fixed parameters of the current subproblem.

\begin{proposition}[Convex-over-nonlinear structure of reward function in DCEE]
\label{prop:con_structure_strict_dcee}
For fixed \(x_k\) and \(\Theta_k\), the reward function in DCEE is convex over nonlinear with respect to $u$.
\end{proposition}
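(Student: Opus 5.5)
The plan is to show directly that $L_k(u)$ is a composition $L_k=\phi\circ F_k$. Here the outer function $\phi:\mathbb R^{n_F}\to\mathbb R$, $\phi(r)=\tfrac12\|r\|_2^2$, is convex, and the inner map $F_k:\mathbb R^{n_u}\to\mathbb R^{n_F}$ with $n_F=(N+1)n_y$ is a (generally nonlinear) smooth map of $u$ alone once $x_k$ and $\Theta_k$ are frozen. The first step fixes what ``convex over nonlinear'' means, following the standard definition in the SCP/GGN literature~\cite{messerer2021survey}: a function $g(u)=\phi(F(u))$ with $\phi$ convex and $F$ differentiable. The proposition then reduces to verifying these two properties and the identity $L_k=\phi\circ F_k$. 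The identity is already available from \eqref{eq:Dk_residual_norm} and \eqref{eq:f0k_def}.

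For the outer loss, convexity of $\phi$ follows from $\nabla^2\phi=I\succeq0$. Equivalently, the squared Euclidean norm is convex, being a norm composed with the convex increasing map $t\mapsto t^2/2$ on $t\ge0$. It is worth stating that $\phi$ does not depend on $u$, $x_k$ or $\Theta_k$, so the convexity is independent of the current subproblem.

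For the inner map, I will trace the dependence chain with $x_k,\Theta_k$ fixed. First, $u\mapsto y_{k+1|k}(u)=Cf(x_k,u)$ is differentiable whenever $f$ is. Second, $u\mapsto\hat\theta^i_{k+1|k}(u)$ in \eqref{eq:predicted_estimator_update} is built from $\theta_k^i$, $\bar\theta_k$ (constants), $\psi(y_{k+1|k})$, $\psi_0$ and $\mathcal R$, which is linear in $\theta$. Third, $\hat\gamma^i_{k+1|k}=\Gamma(\hat\theta^i_{k+1|k})$ is differentiable by the smoothness assumption on $\Gamma$. Finally, $\bar\gamma_{k+1|k}$ and $\tilde\gamma^i_{k+1|k}$ are affine combinations of the $\hat\gamma^i$. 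Hence $r_{{\rm ex},k}$ and $r_{{\rm unc},k}$, and therefore $F_k$, are differentiable on the region of interest. This also holds for any estimator map allowed by the preceding Remark. I will state smoothness of $f,\psi,\psi_0$ as a standing assumption, since the paper leaves it implicit.

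The main obstacle is conceptual rather than technical. $L_k$ itself is generally not convex in $u$, because $F_k$ is nonlinear. I will therefore stress that the claim concerns the composite structure, not convexity of $L_k$. To make this usable later, I will note the consequence that linearizing $F_k$ at $\bar u$ gives $\phi(F_k(\bar u)+J_k(\bar u)(u-\bar u))$, which is a convex quadratic in $u$. Its Hessian $J_k^\top J_k\succeq0$ involves only first derivatives. This follows immediately from convexity of $\phi$ composed with an affine map.
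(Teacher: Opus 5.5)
Your proposal is correct and follows the same route as the paper: you identify $L_k=\phi\circ F_k$ with the convex outer loss $\phi(v)=\tfrac12\|v\|_2^2$ and the inner map $F_k=\operatorname{col}(r_{{\rm ex},k},r_{{\rm unc},k})$, which carries all the nonlinearity, using $D_k=\|F_k\|_2^2$. Your step-by-step check that $F_k$ is differentiable through $y_{k+1|k}$, $\hat\theta^i_{k+1|k}$ and $\Gamma$ adds detail the paper leaves implicit, since the paper relies on the smoothness assumption on $\Gamma$ and the Remark on estimator maps; it does not change the argument.
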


\begin{proof}
Define
\begin{equation}
\phi(v):=\frac12\|v\|_2^2.
\label{eq:F0_phi0_dcee}
\end{equation}
The DCEE problem is therefore
\begin{equation}
\min_{u\in\mathbb R^{n_u}}
L_{k}(u) = \min_{u\in\mathbb R^{n_u}} \phi(F_{k}(u)).
\label{eq:unconstrained_dcee_composite_problem}
\end{equation} 
The result follows from the residual definitions \eqref{eq:rex_dcee}-\eqref{eq:F0_phi0_dcee}. 
The function \(\phi(v)=\frac12\|v\|_2^2\) is convex, and \(F_k(u)\) collects the nonlinear residual terms. Thus, \(L_k(u)\) has the stated composite form.
\end{proof}

\begin{remark}
Proposition~\ref{prop:con_structure_strict_dcee} does not claim global convexity of \(L_k(u)\) with respect to \(u\). 
The DCEE problem may still be nonconvex, since \(F_k\) is generally nonlinear. 
The role of the proposition is to identify a key structural property of the reward function in DCEE: the nonlinearity is contained in the inner residual map, whereas the outer loss is convex.
This convex-over-nonlinear structure provides useful numerical properties for optimization, as discussed in detail in the next section.
\end{remark}


\section{Structure-Exploiting Numerical Methods for DCEE}
\label{sec:structure_exploiting_methods_dcee}

The previous section showed that the reward function in DCEE can be written in a convex-over-nonlinear form. 
Inspired by \cite{messerer2021survey}, this section uses this structure to construct a direct numerical method for the DCEE problem.


\subsection{Sequential Convex Programming}
\label{subsec:sequential_convex_programming_dcee}

Sequential convex programming provides the structural principle for exploiting the composite form \eqref{eq:unconstrained_dcee_composite_problem}. 
Instead of approximating the entire nonlinear objective by an arbitrary quadratic model, SCP linearizes only the nonlinear inner map \(F_k\) and keeps the convex outer function \(\phi\) unchanged.

Let \(u^j\) be the $j$-th  inner iterate. 
Define the first-order approximation of the inner residual map as
\begin{equation}
F_{k}^{\rm lin}(u;u^j) = F_{k}(u^j) + J_{k}(u^j)(u-u^j),
\label{eq:F0_lin_dcee}
\end{equation}
with Jacobian
\begin{equation}
J_{k}(u) := \frac{\partial F_{k}(u)}{\partial u}.
\label{eq:J0k_dcee}
\end{equation}
The corresponding sequential convex approximation is
\begin{equation}
u^{j+1} \in \argmin_{u\in\mathbb R^{n_u}} \phi \left( F_{k}^{\rm lin}(u;u^j) \right).
\label{eq:dcee_scp_general}
\end{equation}
Since \(F_{k}^{\rm lin}(\cdot;u^j)\) is affine in \(u\) and \(\phi\) is convex, \eqref{eq:dcee_scp_general} is a convex optimization subproblem.

For the DCEE, the corresponding convex subproblem \eqref{eq:dcee_scp_general} becomes
\begin{equation}
u^{j+1} \in \argmin_{u\in\mathbb R^{n_u}} \frac12 \left\| F_{k}(u^j)+J_{k}(u^j)(u-u^j) \right\|_2^2 .
\label{eq:dcee_scp_ls}
\end{equation}
Equivalently, with $\Delta u^j:=u-u^j$, we obtain SCP subproblem (SCP-DCEE) of DCEE
\begin{equation}
\Delta u^j \in \argmin_{\Delta u\in\mathbb R^{n_u}} \frac12 \left\| F_{k}(u^j)+J_{k}(u^j)\Delta u \right\|_2^2 .
\label{eq:dcee_scp_delta}
\end{equation}

\begin{proposition}[Convexity of the SCP-DCEE subproblem]
\label{prop:convexity_dcee_scp}
For fixed \(x_k\), \(\Theta_k\), and inner iterate \(u^j\), the subproblem \eqref{eq:dcee_scp_general} is convex. 
For the DCEE, it reduces to the convex least-squares problem \eqref{eq:dcee_scp_delta}.
\end{proposition}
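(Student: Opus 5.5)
The plan is to prove convexity of \eqref{eq:dcee_scp_general} by composing a convex function with an affine map, and then to verify directly that the resulting objective is a convex quadratic. We fix \(x_k\), \(\Theta_k\) and \(u^j\) throughout. Under the differentiability assumption of the Remark following \eqref{eq:predicted_estimator_update}, together with the smoothness of \(f\) and \(\Gamma\), the quantities \(F_k(u^j)\) and \(J_k(u^j)\) are constants: a vector and a matrix that do not depend on the decision variable. Hence the map \(u\mapsto F_k^{\rm lin}(u;u^j)=F_k(u^j)+J_k(u^j)(u-u^j)\) in \eqref{eq:F0_lin_dcee} is affine in \(u\).

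The first step is the general claim. For any \(u_1,u_2\) and \(\lambda\in[0,1]\), affinity gives
\begin{equation*}
F_k^{\rm lin}(\lambda u_1+(1-\lambda)u_2;u^j)=\lambda F_k^{\rm lin}(u_1;u^j)+(1-\lambda)F_k^{\rm lin}(u_2;u^j).
\end{equation*}
We then apply convexity of \(\phi\), which was established in the proof of Proposition~\ref{prop:con_structure_strict_dcee}. This yields
\begin{equation*}
\phi\bigl(F_k^{\rm lin}(\lambda u_1+(1-\lambda)u_2;u^j)\bigr)\le \lambda\,\phi\bigl(F_k^{\rm lin}(u_1;u^j)\bigr)+(1-\lambda)\,\phi\bigl(F_k^{\rm lin}(u_2;u^j)\bigr).
\end{equation*}
So the objective of \eqref{eq:dcee_scp_general} is convex. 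The feasible set is \(\mathbb R^{n_u}\), which is convex, and therefore the subproblem is a convex program.

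The second step specializes to the DCEE. Substituting \(\phi(v)=\frac12\|v\|_2^2\) gives \eqref{eq:dcee_scp_ls}. The change of variables \(\Delta u=u-u^j\) is an affine bijection, so it preserves both convexity and the set of minimizers. This gives \eqref{eq:dcee_scp_delta}. To make the least-squares structure explicit, we expand the objective with \(F:=F_k(u^j)\) and \(J:=J_k(u^j)\):
\begin{equation*}
\tfrac12\Delta u^\top J^\top J\,\Delta u+F^\top J\,\Delta u+\tfrac12 F^\top F.
\end{equation*}
Its Hessian is \(J^\top J\), which is positive semidefinite because \(\Delta u^\top J^\top J\Delta u=\|J\Delta u\|_2^2\ge 0\). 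This confirms convexity independently of the first step. The minimizers are exactly the solutions of the normal equations \(J^\top J\,\Delta u=-J^\top F\), which are always solvable since \(J^\top F\in\operatorname{range}(J^\top)=\operatorname{range}(J^\top J)\). This is the reason for writing \(\in\) rather than \(=\) in \eqref{eq:dcee_scp_delta}.

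No step is a real obstacle. The only point needing care is justifying that \(J_k(u^j)\) exists, which requires differentiability of the composite map through \(f\), the estimator prediction \eqref{eq:predicted_estimator_update} and \(\Gamma\). We would cite the standing smoothness assumptions and the Remark for this. We would also note that the minimizer is unique when \(J\) has full column rank, without needing this for the statement.
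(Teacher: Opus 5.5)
Your proposal is correct and follows the paper's own argument: the linearized residual map is affine, its composition with the convex outer loss $\phi$ is convex, and for $\phi(v)=\frac12\|v\|_2^2$ the objective becomes a convex quadratic in $\Delta u$ with positive semidefinite Hessian $J_k(u^j)^\top J_k(u^j)$. Your additional remarks on the affine change of variables and the solvability of the normal equations via $\operatorname{range}(J^\top J)=\operatorname{range}(J^\top)$ are valid refinements that the paper does not spell out.
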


\begin{proof}
The linearized residual map \(F_{k}^{\rm lin}(\cdot;u^j)\) is affine.
The composition of a convex function with an affine map is convex. 
For \(\phi(v)=\frac12\|v\|_2^2\), the resulting objective is a convex quadratic function of \(\Delta u^j\).
\end{proof}

\begin{remark}
This proposition explains the first numerical advantage of the proposed formulation. 
Although the original DCEE problem is generally nonconvex, each subproblem is convex. 
\end{remark}


\subsection{Gauss-Newton Step and Generalized Hessian Approximation}
\label{subsec:gn_step_hessian_approximation}

The preceding subsection derived the SCP-DCEE subproblem from the convex-over-nonlinear viewpoint.
Since the outer loss of the reward function in DCEE is quadratic, this SCP subproblem is exactly a Gauss-Newton least-squares step. 
The generalized Gauss-Newton Hessian decomposition further explains the numerical advantage of this step.

Expanding the squared norm in \eqref{eq:dcee_scp_delta} gives
\begin{align}
& \frac12 \left\| F_{k}(u^j)+J_{k}(u^j)\Delta u \right\|_2^2 \nonumber\\
& = L_{k}(u^j) + \nabla L_{k}(u^j)^\top \Delta u + \frac12 \Delta u^\top J_{k}(u^j)^\top J_{k}(u^j) \Delta u,
\label{eq:scp_expansion_to_gn}
\end{align}
where
\begin{equation}
\nabla L_{k}(u^j) = J_{k}(u^j)^\top F_{k}(u^j).
\label{eq:gradient_ls_dcee}
\end{equation}
The constant term \(L_k(u^j)\) does not affect the minimizer. 
Hence the search direction is determined by the linear term \(\nabla L_k(u^j)^\top \Delta u\) and the quadratic term generated by the residual Jacobian. 
This gives the Gauss-Newton step
\begin{align}
\Delta u^j \in \argmin_{\Delta u\in\mathbb R^{n_u}} \quad & \frac12 \Delta u^\top B_{{\rm GN},k}(u^j) \Delta u + \nabla L_{k}(u^j)^\top \Delta u,
\label{eq:dcee_gn_quadratic_model}
\end{align}
with
\begin{equation}
B_{{\rm GN},k}(u^j) = J_{k}(u^j)^\top J_{k}(u^j).
\label{eq:GN_matrix_dcee}
\end{equation}
Thus, the curvature used in the local quadratic model is built only from the Jacobian of the DCEE residual map. 
This is different from a full Newton step, which would require second-order derivative information.

The Gauss-Newton matrix \(B_{{\rm GN}}\)  can also be understood from the generalized Gauss-Newton Hessian decomposition. 
For the composite objective $L_{k}(u)=\phi(F_{k}(u))$, the exact Hessian satisfies
\begin{equation}
\nabla^2 L_{k}(u) = B_{{\rm GGN},k}(u) + E_{{\rm GGN},k}(u),
\label{eq:ggn_hessian_split_dcee}
\end{equation}
where
\begin{equation}
B_{{\rm GGN},k}(u) = J_{k}(u)^\top \nabla^2\phi(F_{k}(u)) J_{k}(u)
\label{eq:BGGN_dcee}
\end{equation}
\begin{equation}
E_{{\rm GGN},k}(u) = \sum_{\ell=1}^{m} [\nabla\phi(F_k(u))]_\ell \nabla^2 F_{k,\ell}(u).
\label{eq:EGGN_dcee}
\end{equation}
Here \([\nabla\phi(F_k(u))]_\ell\) denotes the \(\ell\)-th component of the gradient of the outer function \(\phi\) evaluated at \(F_k(u)\), \(m\) is the dimension of \(F_{k}\), and \(F_{k,\ell}\) denotes its \(\ell\)-th component.
The first term \(B_{{\rm GGN},k}\) describes the curvature obtained by linearizing the inner residual map and keeping the curvature of the outer loss. 
The second term \(E_{{\rm GGN},k}\) contains the curvature of the nonlinear residual map itself. 

\begin{proposition}
\label{prop:gn_step_strict_dcee}
For the reward function in DCEE
\begin{equation}
L_k(u) = \frac12\|F_k(u)\|_2^2,
\end{equation}
the sequential convex approximation \eqref{eq:dcee_scp_delta} is equivalent
to the Gauss-Newton least-squares step \eqref{eq:dcee_gn_quadratic_model}.
Moreover, the generalized Gauss-Newton Hessian approximation is positive semidefinite.
\end{proposition}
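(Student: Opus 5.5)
The argument has two parts: the equivalence of the two subproblems, and the semidefiniteness of the approximation. Both reduce to short algebra once the specific outer loss $\phi(v)=\frac12\|v\|_2^2$ from Proposition~\ref{prop:con_structure_strict_dcee} is substituted.

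\emph{Equivalence.} Write $F:=F_k(u^j)$ and $J:=J_k(u^j)$. Expand the objective of \eqref{eq:dcee_scp_delta} as
\[
\tfrac12\|F+J\Delta u\|_2^2=\tfrac12 F^\top F+F^\top J\Delta u+\tfrac12\Delta u^\top J^\top J\Delta u .
\]
Apply the chain rule to $L_k=\phi\circ F_k$ with $\nabla\phi(v)=v$ to get $\nabla L_k(u^j)=J^\top F$, which is \eqref{eq:gradient_ls_dcee}. The first term equals $L_k(u^j)$, and the middle term equals $\nabla L_k(u^j)^\top\Delta u$. This reproduces \eqref{eq:scp_expansion_to_gn} with $B_{{\rm GN},k}=J^\top J$ as in \eqref{eq:GN_matrix_dcee}.

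The two objectives therefore differ only by the constant $L_k(u^j)$, which is independent of $\Delta u$. Hence their minimizer sets coincide. I would stress that this is equality of $\arg\min$ sets, not of particular minimizers. When $J$ is rank deficient, both problems have the same affine set of minimizers, namely the solutions of the normal equations $J^\top J\Delta u=-J^\top F$. That set is nonempty because $J^\top F\in\operatorname{range}(J^\top)=\operatorname{range}(J^\top J)$.

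\emph{Positive semidefiniteness.} Since $\nabla^2\phi(v)=I_m$ for every $v$, \eqref{eq:BGGN_dcee} reduces to $B_{{\rm GGN},k}(u)=J_k(u)^\top J_k(u)=B_{{\rm GN},k}(u)$. For any $w\in\mathbb R^{n_u}$,
\[
w^\top J_k^\top J_k w=\|J_k w\|_2^2\ge 0,
\]
so the matrix is symmetric positive semidefinite. This holds wherever $F_k$ is differentiable, that is, on the region of interest assumed in the Remark following \eqref{eq:predicted_estimator_update}. The same identity shows that the GN and GGN approximations coincide for DCEE. It also shows that the discarded term \eqref{eq:EGGN_dcee} becomes $\sum_\ell F_{k,\ell}\nabla^2F_{k,\ell}$, which is the only place second derivatives would enter.

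\emph{Main obstacle.} The only delicate point is that the residual map must be differentiable, so that $J_k$ exists. This requires $u\mapsto y_{k+1|k}(u)$, $u\mapsto\hat\theta^i_{k+1|k}(u)$ and $\Gamma$ to be differentiable. These hold by the smoothness of $f$ (implicitly assumed), the smoothness of $\psi$ and $\psi_0$ in \eqref{eq:predicted_estimator_update}, and the assumed smoothness of $\Gamma$. Given these, $r_{\rm ex}$ and $r_{\rm unc}$ are finite sums and compositions of differentiable maps, and therefore differentiable. With that in hand, the remaining steps are routine.
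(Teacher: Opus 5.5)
Your proposal is correct and follows essentially the same route as the paper. You expand $\frac12\|F_k(u^j)+J_k(u^j)\Delta u\|_2^2$ to recover the Gauss-Newton quadratic model up to the constant $L_k(u^j)$, then use $\nabla^2\phi=I$ to get $B_{{\rm GGN},k}=J_k^\top J_k\succeq 0$; your extra remarks on equality of the $\arg\min$ sets via the normal equations and on the differentiability of $F_k$ are valid details that the paper leaves implicit.
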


\begin{proof}
The equivalence follows by expanding the squared norm in \eqref{eq:dcee_scp_delta}, which yields the quadratic model \eqref{eq:dcee_gn_quadratic_model}. 
Since the DCEE outer loss is the squared norm, we have
\begin{equation}
\nabla^2\phi(F_{k}(u)) = I.
\end{equation}
Hence, the generalized Gauss-Newton Hessian approximation becomes
\begin{equation}
B_{{\rm GGN},k}(u) = J_{k}(u)^\top J_{k}(u) = B_{{\rm GN},k}(u).
\label{eq:BGGN_equals_GN_dcee}
\end{equation}
Thus, the Gauss-Newton matrix used in \eqref{eq:dcee_gn_quadratic_model} is exactly the generalized Gauss-Newton curvature term of the composite reward function in DCEE.
The positive semidefiniteness follows immediately from $J_k(u)^\top J_k(u) \succeq 0$.
\end{proof}



\begin{remark}
This proposition shows the second numerical advantage of the DCEE formulation.
The Gauss-Newton step uses the positive-semidefinite approximation
\(J_k(u)^\top J_k(u)\), which is formed only from the first-order derivative of the DCEE residual map.
The second-order part omitted from the exact Hessian is \(E_{{\rm GGN},k}(u)\).
It contains second derivatives of the residual components weighted by \([\nabla\phi(F_k(u))]_\ell\), which equals \(F_{k,\ell}(u)\) for the squared-norm outer loss. 
Therefore, near a regular local minimizer with small residual, this omitted curvature term has a reduced influence.
Hence the indefinite second-order terms in the exact Hessian are not used, and no second derivatives are required.
This property is crucial for fast online solution of the numerical DCEE.
\end{remark}

Note that SCP, GN, and GGN discussed in this section play different roles in the numerical DCEE. 
SCP gives the basic approximation principle: linearize the nonlinear DCEE residual map and retain the convex outer loss. 
Since the DCEE outer loss is quadratic, the corresponding SCP subproblem reduces to a Gauss-Newton least-squares step. 
The generalized Gauss-Newton Hessian decomposition then explains why the curvature matrix in this step is positive semidefinite and depends only on first-order derivatives.


\subsection{Local Convergence Analysis}
\label{subsec:local_convergence_analysis_dcee}

The local convergence property of the Gauss-Newton iteration of DCEE is stated below. 
The statement concerns the  theoretical properties of structure-exploiting numerical DCEE. 
It is not a closed-loop stability result of the controlled system.
The stability and convergence of the numerical DCEE controller will be a focus of our future research.

\begin{theorem}[Local convergence of the numerical DCEE]
\label{thm:linear_local_convergence_dcee_gn}
Consider the smooth unconstrained DCEE subproblem
\begin{equation}
\min_{u\in\mathbb R^{n_u}} L_k(u) = \min_{u\in\mathbb R^{n_u}} \phi(F_k(u)).
\label{eq:dcee_smooth_con_problem_for_theorem}
\end{equation}
Assume that \(F_k\) and \(\phi\) are sufficiently smooth in a neighborhood of a local minimizer \(u_k^\ast\), and that
\begin{equation}
\nabla L_k(u_k^\ast)=0, \quad B_{{\rm GGN},k}(u_k^\ast)\succ0 .
\label{eq:dcee_local_minimizer_gn}
\end{equation}
Then \(u_k^\ast\) is a fixed point of the full-step DCEE-Gauss-Newton iteration, and the iteration is locally well-defined. 
Its local linear contraction-or divergence-rate is characterized by the smallest \(\alpha_k\ge0\) satisfying
\begin{equation}
-\alpha_k B_{{\rm GGN},k}(u_k^\ast) \preceq E_{{\rm GGN},k}(u_k^\ast) \preceq \alpha_k B_{{\rm GGN},k}(u_k^\ast).
\label{eq:dcee_gn_lmi_rate}
\end{equation}
If \(\alpha_k<1\), the iteration is locally Q-linearly convergent to \(u_k^\ast\). 
A sufficient LMI condition is
\begin{equation}
\frac{1}{1+\alpha_k} \nabla^2 L_k(u_k^\ast) \preceq B_{{\rm GGN},k}(u_k^\ast) \preceq \frac{1}{1-\alpha_k} \nabla^2 L_k(u_k^\ast).
\label{eq:dcee_gn_sufficient_lmi}
\end{equation}
\end{theorem}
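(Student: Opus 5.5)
The plan is to treat the full-step DCEE--Gauss--Newton iteration as a fixed-point map and read off its local rate from the spectrum of its Jacobian at \(u_k^\ast\). Write \(B(u):=B_{{\rm GGN},k}(u)\) and \(E(u):=E_{{\rm GGN},k}(u)\). Since \(\nabla^2\phi=I\) by Proposition~\ref{prop:gn_step_strict_dcee}, the iteration \eqref{eq:dcee_gn_quadratic_model} is
\[
u^{j+1}=G(u^j):=u^j-B(u^j)^{-1}\nabla L_k(u^j).
\]
By assumption \eqref{eq:dcee_local_minimizer_gn}, \(B(u_k^\ast)\succ0\), and \(B=J_k^\top J_k\) is continuous because \(F_k\) is smooth. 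Hence \(B(u)\succ0\) on a neighborhood of \(u_k^\ast\). On that neighborhood the subproblem \eqref{eq:dcee_scp_delta} has a unique minimizer, and \(G\) is well-defined and \(C^1\); this gives local well-definedness. Substituting \(\nabla L_k(u_k^\ast)=0\) gives \(G(u_k^\ast)=u_k^\ast\), so \(u_k^\ast\) is a fixed point.

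Next I would differentiate \(G\) at \(u_k^\ast\). The term involving the derivative of \(B^{-1}\) is multiplied by \(\nabla L_k(u_k^\ast)=0\), so it vanishes. Using the split \eqref{eq:ggn_hessian_split_dcee}, this leaves
\[
G'(u_k^\ast)=I-B^{-1}\nabla^2L_k=I-B^{-1}(B+E)=-B^{-1}E,
\]
with everything evaluated at \(u_k^\ast\). By the smoothness of \(G\), its local linear contraction rate is the spectral radius \(\rho(B^{-1}E)\). This rate is also the asymptotic factor in the \(B\)-weighted norm.

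To connect this with \eqref{eq:dcee_gn_lmi_rate}, I would symmetrize. Since \(B^{-1}E\) is similar to \(M:=B^{-1/2}EB^{-1/2}\), which is symmetric, \(\rho(B^{-1}E)=\|M\|_2\). The smallest \(\alpha_k\) with \(-\alpha_k I\preceq M\preceq\alpha_k I\) is exactly \(\|M\|_2\). Congruence by \(B^{1/2}\) turns this into \eqref{eq:dcee_gn_lmi_rate}, so \(\alpha_k\) is precisely the local rate. In the weighted norm \(\|v\|_B:=\|B^{1/2}v\|\) one has \(\|G'(u_k^\ast)\|_B=\alpha_k\). If \(\alpha_k<1\), the mean value theorem with continuity of \(G'\) yields a neighborhood on which \(\|G(u)-u_k^\ast\|_B\le(\alpha_k+\epsilon)\|u-u_k^\ast\|_B\) with \(\alpha_k+\epsilon<1\). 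This gives Q-linear convergence in that norm. If instead \(\alpha_k>1\), \(M\) has an eigenvalue of modulus \(\alpha_k\), and the fixed point repels along the corresponding direction, which is the divergence case.

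For the sufficient condition \eqref{eq:dcee_gn_sufficient_lmi}, I would show it is equivalent to \eqref{eq:dcee_gn_lmi_rate} once \(\nabla^2L_k(u_k^\ast)=B+E\) is substituted. The step \(B\preceq\frac1{1-\alpha_k}(B+E)\) rearranges to \((1-\alpha_k)B\preceq B+E\), that is, \(-\alpha_kB\preceq E\). The step \(\frac1{1+\alpha_k}(B+E)\preceq B\) rearranges to \(E\preceq\alpha_kB\). Note that \(\frac1{1-\alpha_k}\) is well-defined because \(\alpha_k<1\).

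The main obstacle is rigor on the rate characterization: equating the spectral radius of the nonsymmetric matrix \(B^{-1}E\) with the LMI constant via the \(B^{1/2}\)-congruence, and choosing the weighted norm so that Q-linear contraction holds in a genuine norm. The converse direction, divergence when \(\alpha_k>1\), needs a standard unstable-fixed-point argument, and I would only sketch it.
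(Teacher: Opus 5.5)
Your proposal is correct. It proves directly what the paper only invokes.

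The paper's proof is a reduction plus a citation. It notes that the DCEE subproblem has the unconstrained convex-over-nonlinear form. It notes that for \(\phi(v)=\frac12\|v\|_2^2\) the SCP step coincides with the Gauss--Newton step and \(\nabla^2L_k=B_{{\rm GGN},k}+E_{{\rm GGN},k}\). It then imports the rate characterization \eqref{eq:dcee_gn_lmi_rate} from the GGN/SCP local convergence theorem in \cite{messerer2021survey,diehl2019local}, and obtains \eqref{eq:dcee_gn_sufficient_lmi} by substituting \(E_{{\rm GGN},k}=\nabla^2L_k-B_{{\rm GGN},k}\).

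You instead reconstruct the argument behind that theorem for the squared-norm outer loss. You use the fixed-point map \(G(u)=u-B(u)^{-1}\nabla L_k(u)\) and observe that the derivative of \(B^{-1}\) multiplies \(\nabla L_k(u_k^\ast)=0\), giving \(G'(u_k^\ast)=-B^{-1}E\). Its similarity to the symmetric matrix \(B^{-1/2}EB^{-1/2}\) makes the spectral radius equal to the smallest admissible \(\alpha_k\). You finish with a contraction estimate in the \(B(u_k^\ast)\)-weighted norm.

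Your route buys self-containedness and a precision the paper leaves implicit. Q-linear convergence depends on the norm. Since \(-B^{-1}E\) is generally non-symmetric, its Euclidean norm can exceed \(\alpha_k\), by up to the square root of the condition number of \(B\). The weighted norm you pick is exactly the one in which \(\alpha_k\) is a genuine one-step contraction factor.

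The cited route buys generality. It covers arbitrary smooth convex \(\phi\), for which the SCP subproblem is no longer a Gauss--Newton step yet has the same local rate. Since DCEE has a quadratic outer loss, your specialization loses nothing here.

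Your treatment of \eqref{eq:dcee_gn_sufficient_lmi} as equivalent to \eqref{eq:dcee_gn_lmi_rate} after substitution is right, and it holds for any \(\alpha_k\in[0,1)\), not only the smallest one. Sketching the divergence case \(\alpha_k>1\) matches the paper's level of detail, since the paper does not prove it either.
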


\begin{proof}
We have shown that the DCEE problem has the smooth unconstrained convex-over-nonlinear form studied in the local convergence theory of SCP/GGN methods. 
Moreover, in the DCEE case, \(\phi(v)=\frac12\|v\|_2^2\), so the SCP-DCEE step coincides with the Gauss-Newton least-squares step, and the Hessian split is
\begin{equation}
\nabla^2 L_k(u)=B_{{\rm GGN},k}(u)+E_{{\rm GGN},k}(u). \notag
\end{equation}
Therefore, under \eqref{eq:dcee_local_minimizer_gn}, the standard local convergence theorem for unconstrained GGN/SCP applies directly and yields \eqref{eq:dcee_gn_lmi_rate}.
The sufficient condition \eqref{eq:dcee_gn_sufficient_lmi} follows by substituting \(E_{{\rm GGN},k}=\nabla^2 L_k-B_{{\rm GGN},k}\). 
Detailed derivations are given in the local convergence analysis of generalized Gauss-Newton and sequential convex programming \cite{messerer2021survey,diehl2019local}.
\end{proof}


\begin{algorithm}[t]
\small
\caption{Numerical DCEE}
\label{alg:direct_numerical_dcee_gn}
\begin{algorithmic}[1]
\REQUIRE Current state measurement \(x_k\), previous input \(u_{k-1}\), current estimator ensemble \(\Theta_k\), maximum number of inner iterations \(N_{\rm it}\), tolerance \(\epsilon_{\rm GN}>0\).
\ENSURE Applied control input \(u_k\), updated estimator ensemble \(\Theta_{k+1}\).

\STATE Initialize \(u_k^0 \leftarrow u_{k-1}\),
\(j\leftarrow 0\), \(\epsilon_k \leftarrow +\infty\).

\WHILE{\(j<N_{\rm it}\) and \(\epsilon_k>\epsilon_{\rm GN}\)}
    \STATE Predict \(y_{k+1|k}(u_k^j)\).
    \STATE Compute the candidate-input-induced predicted estimator ensemble $\left\{\hat\theta_{k+1|k}^i(u_k^j)\right\}_{i=1}^N$.
    \STATE Compute the predicted optimal operating ensemble $\left\{\hat\gamma_{k+1|k}^i(u_k^j)\right\}_{i=1}^N$.
    \STATE Construct the DCEE residual \(F_k(u_k^j)\) and its Jacobian
    \(J_k(u_k^j)\).
    \STATE Solve the Gauss-Newton least-squares step
    \[
    \Delta u_k^j \in
    \argmin_{\Delta u}
    \frac12
    \left\|
    F_k(u_k^j)+J_k(u_k^j)\Delta u
    \right\|_2^2 .
    \]
    \STATE Compute the stopping measure
    \[
    \epsilon_k \leftarrow
    \frac{\|\Delta u_k^j\|}{1+\|u_k^j\|}.
    \]
    \STATE Update $u_k^{j+1} \leftarrow u_k^j+\Delta u_k^j$.
    \STATE Set \(j\leftarrow j+1\).
\ENDWHILE

\STATE \textbf{Apply} \(u_k\leftarrow u_k^j\).
\STATE Observe the next measurement and reward pair \((y_{k+1},\mathcal R_{k+1})\).
\STATE Update the estimator ensemble from \(\Theta_k\) to \(\Theta_{k+1}\) using the measured pair \((y_{k+1},\mathcal R_{k+1})\).

\STATE \textbf{Return} \(u_k,\Theta_{k+1}\).
\end{algorithmic}
\end{algorithm}

\subsection{Numerical DCEE Implementation}
\label{subsec:numerical_method_implementation}

The preceding subsections lead to a fast and accurate numerical implementation of the DCEE problem. 
Algorithm~\ref{alg:direct_numerical_dcee_gn} summarizes the online procedure.
The control input is obtained after a few Gauss-Newton iterations.
In a real-time feedback setting, solving the numerical subproblem to full convergence is not required. 
The iteration stops when the relative step size is smaller than the prescribed tolerance, or when the maximum number of iterations is reached.
Noted that the Jacobian can be obtained by automatic differentiation. 
The proposed method only requires first-order derivatives of the DCEE residual map. 
There is no need to compute second derivatives for the prediction, the ensemble-learning update, or the optimal-operation map.

\section{Case Study and Results}
\label{sec:case_study_results}

This section validates the proposed numerical DCEE method on a vehicle cruising auto-optimization problem. 
First, the vehicle cruising auto-optimization problem formulation is introduced. 
Then, the proposed numerical DCEE method is compared with existing methods, with emphasis on closed-loop performance and computational efficiency. 
Finally, hardware-in-the-loop experiments are conducted to verify the real-time feasibility of the proposed method under varying driving conditions.


\subsection{Vehicle Cruising Auto-Optimization under Unknown Environment}
\label{subsec:eco_cruising}

We consider an autonomous electric vehicle operating under unknown road-load conditions, such as tailwind or headwind, variations in aerodynamic drag, payload changes, and rolling-resistance variations. 
These factors affect the vehicle speed-force operating point and therefore influence the energy-consumption characteristics. 
This setting is consistent with eco-driving and predictive cruise control, where the vehicle speed is optimized by jointly considering longitudinal dynamics, road environment, traffic requirements, and energy-consumption models \cite{sciarretta2020energy,han2019fundamentals,zhang2023energy}.

The longitudinal speed dynamics are modeled as
\begin{equation}
v_{k+1}= f(v_k, F_{m,k}, \xi),
\label{eq:eco_vehicle_dynamics}
\end{equation}
where \(v_k\) is the vehicle speed, \(F_{m,k}\) is the motor traction force, and \(\xi\) denotes the unknown lumped road-load effect. 

To motivate the reward model, let $z=\frac{v}{v_{\rm scale}}$, where \(v_{\rm scale}\) is a constant normalization speed.. 
A local second-order approximation of the electric energy-consumption map \cite{han2019fundamentals} can be written as
\begin{align}
\mathcal E(z,F_{m};\xi) 
=
p_0(\xi)
+p_1(\xi)z^2
+p_2(\xi)F_{m}\nonumber \\
+p_3(\xi)F_{m}^2
+p_4(\xi)zF_{m} .
\label{eq:eco_energy_map}
\end{align}
The simplified fitted map \(p_1z^2+p_2F_{m}+p_3F_{m}^2\) is a special case of \eqref{eq:eco_energy_map} \cite{han2019fundamentals}.

Eco-cruising is not modeled as a pure energy-minimization problem, since such a criterion would prefer low vehicle speeds. 
Therefore, an eco-driving cost is defined to balance energy efficiency and mobility benefit
\begin{equation}
\mathcal R(z,F_{m};\xi) = q_z z-\lambda_E\mathcal E(z,F_{m};\xi),
\label{eq:eco_net_reward}
\end{equation}
where \(q_z z\) represents the mobility benefit and \(\lambda_E>0\) weights the energy efficiency.

Following the longitudinal force-balance formulation commonly used in eco-driving studies~\cite{han2019fundamentals}, the steady cruising motor force is determined by the local road-load condition. 
Around a given cruising region, this steady-state force relation is locally linearized as
\begin{equation}
F_m^{\rm ss}(z;\xi)\approx a_1(\xi)z+a_0(\xi).
\label{eq:eco_steady_force_norm}
\end{equation}
Substituting \eqref{eq:eco_steady_force_norm} into \eqref{eq:eco_net_reward} yields a reduced speed-dependent reward
\begin{equation}
\mathcal R_{\rm ss}(z;\xi) = \mathcal R\big(z,F_{m}^{\rm ss}(z;\xi);\xi\big) = \theta_1^\ast z^2+\theta_2^\ast z+\theta_3^\ast .
\label{eq:eco_reward_quadratic_from_map}
\end{equation}
Thus, the quadratic reward used in this experiment is a local reduced approximation of a speed-force energy map under steady cruising, rather than an artificially prescribed optimal-speed objective.
The reduced steady-cruising reward can be represented by 
\begin{equation}
\mathcal R_{\rm ss}(z;\xi) = C_R - w_z(z-z^\ast(\xi))^2, \quad w_z(\xi)>0 ,
\label{eq:eco_reward_peak_form}
\end{equation}
where \(C_R\) is a fixed reward offset. 
This offset has no influence on the optimal cruising speed and is introduced only for reward normalization.
Expanding \eqref{eq:eco_reward_peak_form} gives the linear-in-parameters form
\begin{equation}
\mathcal R(\theta,v) = \psi(v)^\top\theta,
\quad
\psi(v)=
\begin{bmatrix}
z^2\\ z\\ 1
\end{bmatrix}.
\label{eq:eco_reward_linear}
\end{equation}
Thus, the unknown environment is represented by $\theta^\ast$. The associated optimal speed is 
\begin{equation}
\gamma^\ast = \Gamma(\theta^\ast) = v_{\rm scale} \left( -\frac{\theta_2^\ast}{2\theta_1^\ast} \right).
\label{eq:eco_gamma_map}
\end{equation}
Therefore, variations in headwind, tailwind, aerodynamic drag, payload, or rolling resistance are represented through changes in \(\theta^\ast\), and changes in \(\theta^\ast\) further shift the unknown local optimal cruising speed.

\subsection{Performance Comparison}
\label{subsec:performance_comparison}

\subsubsection{Closed-loop auto-optimization performance}

\begin{figure}[t]
    \centering
    \includegraphics[width=\linewidth]{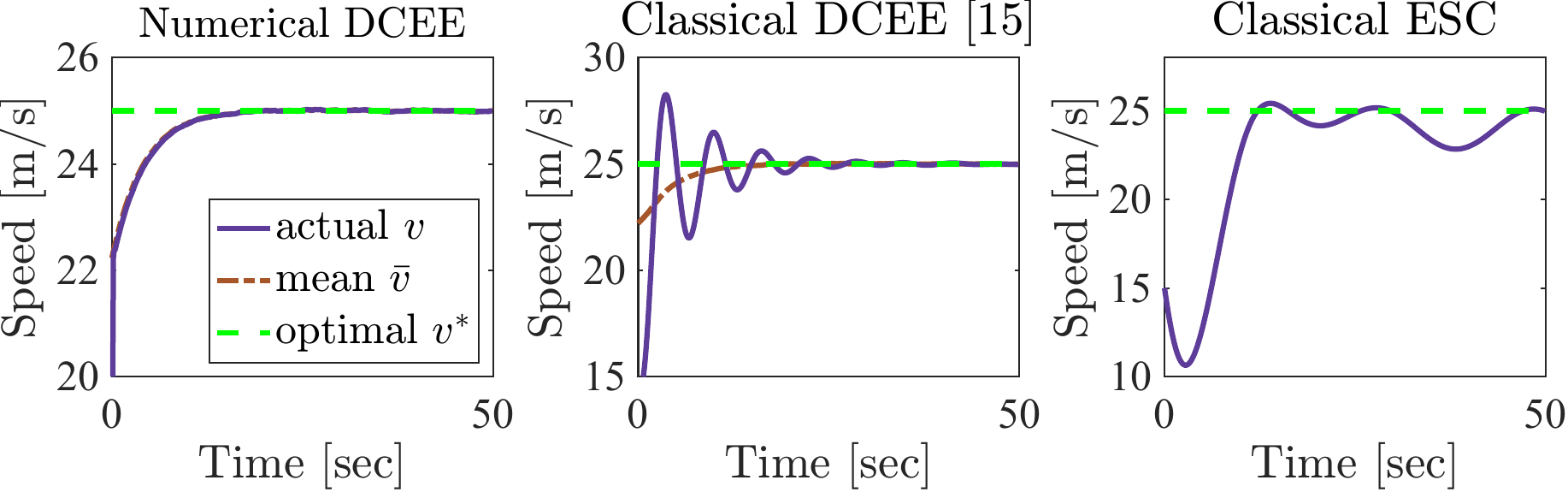}
    \caption{Closed-loop eco-cruising performance comparison among the proposed Numerical DCEE, Classical DCEE \cite{li2025dual}, and Classical ESC methods.}
    \label{fig:eco_speed_Numerical_DCEE}
\end{figure}

\begin{figure}[t]
    \centering
    \includegraphics[width=0.85\linewidth]{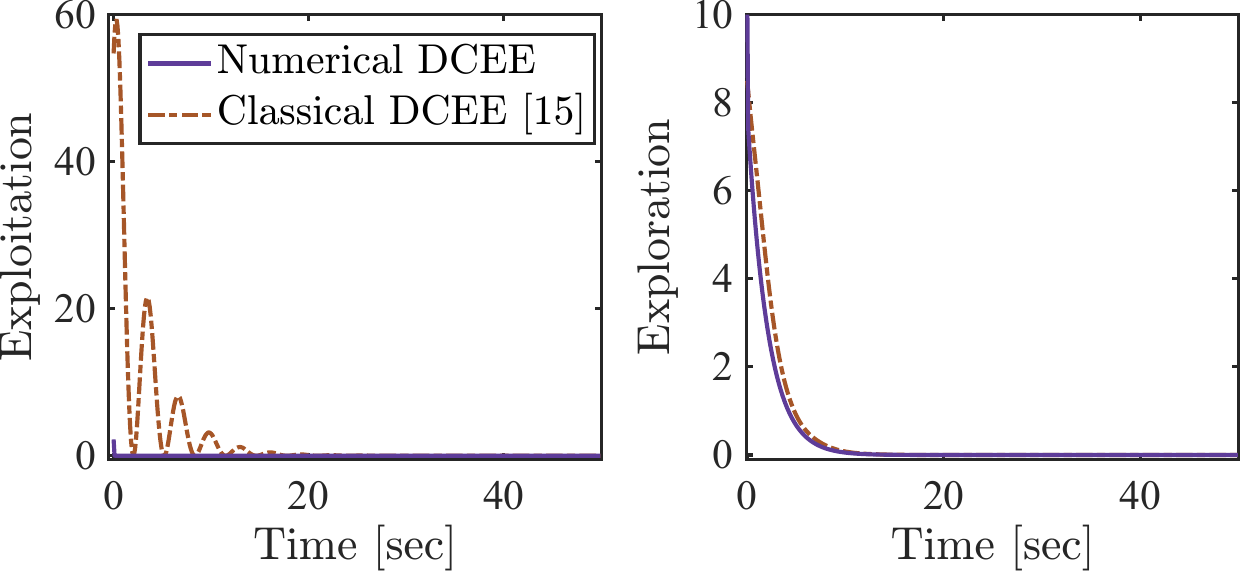}
    \caption{Comparison of the exploitation and exploration terms for numerical DCEE and classical DCEE \cite{li2025dual}.}
    \label{fig:eco_dual_terms}
\end{figure}

\begin{figure}[t]
    \centering
    \includegraphics[width=0.6\linewidth]{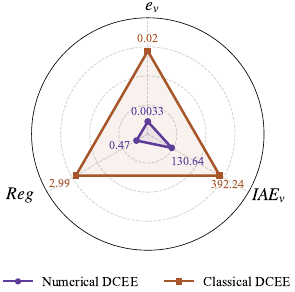}
    \caption{Radar-chart comparison of Numerical DCEE and Classical DCEE using three quantitative metrics: terminal speed error \(e_v\), accumulated absolute speed error \({IAE}_v\), and cumulative regret \({Reg}\). Lower values indicate better performance.}
    \label{fig:eco_radar_metrics}
\end{figure}

\begin{figure}[t]
    \centering
    \includegraphics[width=0.8\linewidth]{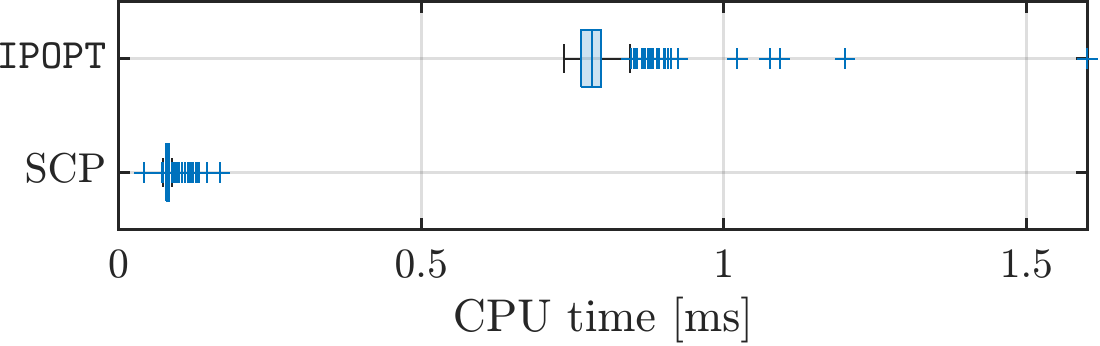}
    \caption{CPU time comparison between the proposed SCP-DCEE method and direct \texttt{IPOPT} solution of the original nonconvex DCEE.}
    \label{fig:cpu_time_gn_ipopt}
\end{figure}

We compare the proposed numerical DCEE method with two baseline methods: a classical DCEE implementation based on an explicit gradient-type update \cite{li2025dual}, and the classical extremum seeking control (ESC) method.
This comparison is intended to evaluate both the closed-loop auto-optimization performance and the effectiveness of the proposed structure-exploiting numerical solution method.

Fig. \ref{fig:eco_speed_Numerical_DCEE} shows the closed-loop speed responses of the three methods. 
The proposed numerical DCEE method drives the vehicle speed smoothly toward the unknown optimal cruising speed \(v^\ast\). 
The estimated mean optimal speed \(\bar v\) also consistently converges to the true optimum, indicating that the ensemble learning mechanism successfully identifies the unknown operating condition during closed-loop operation. 
In contrast, the classical DCEE method reaches the same optimum only after transient oscillations. 
Its explicit gradient-type update is more sensitive to the local scaling of the DCEE objective and produces a larger overshoot before stabilization. 
The classical ESC baseline also approaches the optimal speed on average, but it relies on persistent perturbations for gradient estimation. 
As a result, its response exhibits clear oscillations around the optimum, leading to larger tracking errors and cumulative performance loss.
These results show that the proposed direct numerical DCEE method improves the transient performance of the closed-loop system while preserving the exploration-exploitation behavior of DCEE.

Figure \ref{fig:eco_dual_terms} further compares the exploitation and exploration terms of the proposed numerical DCEE method and the classical DCEE method. 
The exploitation term measures the deviation between the predicted speed and the predicted mean optimal speed, while the exploration term measures the uncertainty of the predicted optimal operating condition. 
The proposed method keeps the exploitation term consistently small during the transient phase, which explains the smooth speed response. 
In contrast, the classical DCEE method produces a large peak in the exploitation term at the initial stage, corresponding to the aggressive transient correction and overshoot in the speed trajectory. 
As the ensemble learner gradually learns the unknown reward parameters, the exploration terms of both methods decrease. 
However, the proposed method reduces uncertainty without introducing a large exploitation cost. 
This validates the numerical advantage of directly optimizing the DCEE objective through the structure-exploiting SCP method.

To quantify the performance, two control-related metrics and one auto-optimization metric are used. 
The terminal speed error is defined as $e_v = \left|v_f-v_f^\ast\right|$, 
and the accumulated absolute speed error is ${IAE}_v = \sum \left|v_k-v_k^\ast\right|$. 
The auto-optimization performance is evaluated by the cumulative regret ${Reg}=  \sum \left[ \mathcal R(\theta_k^\ast,v_k^\ast) - \mathcal R(\theta_k^\ast,v_k) \right]$, 
which measures the cumulative reward loss caused by operating away from the true optimal speed. Smaller values of all three metrics indicate better performance.
The radar plot in Fig.~\ref{fig:eco_radar_metrics} presents a quantitative comparison between numerical DCEE and classical DCEE. 
The proposed method achieves \(e_v=0.0033\), \({IAE}_v=130.64\), and \({Reg}=0.47\), whereas classical DCEE gives \(e_v=0.02\), \({IAE}_v=392.24\), and \({Reg}=2.99\).
Therefore, compared with classical DCEE, the proposed numerical DCEE reduces the terminal speed error by approximately \(83.5\%\), the cumulative speed error by approximately \(66.7\%\), and the cumulative regret by approximately \(84.3\%\). 
These quantitative results are consistent with the trajectory-level observations: compared with the explicit gradient implementation, directly solving the residual-based DCEE subproblem leads to a smoother transient process, smaller tracking error, and lower reward loss.

\subsubsection{Computational Efficiency}

We further compare the computational efficiency of the proposed SCP-based DCEE method with an implementation that directly solves the original nonlinear DCEE using \texttt{IPOPT} \cite{wachter2006implementation} (\texttt{IPOPT}-DCEE). 
In SCP-DCEE, the residual Jacobian matrix is obtained by \texttt{CasADi} automatic differentiation \cite{andersson2018casadi}. 
The control input is then computed using the numerical DCEE procedure described in Algorithm~\ref{alg:direct_numerical_dcee_gn}.
Fig.~\ref{fig:cpu_time_gn_ipopt} shows the CPU-time distributions of SCP-DCEE and \texttt{IPOPT}-DCEE. 
Compared with \texttt{IPOPT}-DCEE, SCP-DCEE is approximately \(9.63\times\) faster on average, reducing the average CPU time from \(0.79\,{\rm ms}\) to \(0.082\,{\rm ms}\). 
It is also approximately \(9.41\times\) faster in the worst case, reducing the maximum CPU time from \(1.60\,{\rm ms}\) to \(0.17\,{\rm ms}\).
More importantly, the computation-time distribution of SCP-DCEE is more concentrated, whereas that of \texttt{IPOPT}-DCEE is more spread out and contains several high-time outliers.
This indicates that exploiting the convex-over-nonlinear structure of the DCEE can lead to more predictable online computational performance.
It should be noted that the classical DCEE implementation based on an explicit one-step gradient update is computationally cheaper, since it does not require solving an online optimization problem. 
However, as shown in the previous comparisons, this computational simplicity comes at the cost of larger transient oscillations and higher cumulative regret. 
The proposed SCP-DCEE method provides a fast and accurate numerical implementation. 


\subsection{Hardware-in-the-loop Experiment}

To verify the real-time capability of the proposed numerical method on an embedded controller, HiL experiments are conducted on the platform shown in Fig.~\ref{fig:HiL_Platform}. 
The experimental platform consists of two parts: the plant and the controller. 
The plant runs on a \texttt{SpeedGoat} real-time target machine. 
The controller is deployed on a prototype embedded vehicle controller based on the \texttt{dSPACE/MicroAutoBox} DS-1403 platform, which is equipped with a Texas Instruments AM5K2E04 processor. 
Communication between the plant and the controller is implemented through a CAN bus.

Figure~\ref{fig:HiL_result} shows the HiL results under changing driving conditions.
The optimal speed changes from \(25\,\mathrm{m/s}\) to \(20\,\mathrm{m/s}\), and then to \(30\,\mathrm{m/s}\).
The proposed controller tracks these changes smoothly and drives the actual speed close to the updated optimal speed.
The estimated mean optimal speed follows the shifted optimum, and the actual vehicle speed tracks it smoothly.
The control input changes according to the updated operating condition and remains smooth during the whole experiment.
The computation time remains below \(83~\mu\mathrm{s}\), indicating strong potential for real-world deployment of the proposed numerical DCEE.
It should be noted that this computation time is shorter than the simulation CPU time reported in Section~\ref{subsec:performance_comparison}, because the implementation is compiled into efficient C code and uses \texttt{BLASFEO}~\cite{frison2018blasfeo}, the basic linear algebra subroutines for embedded optimization library.
These implementation choices enable efficient execution on the vehicle-embedded CPU.

\begin{figure}[t]
    \centering
    \includegraphics[width=0.9\linewidth]{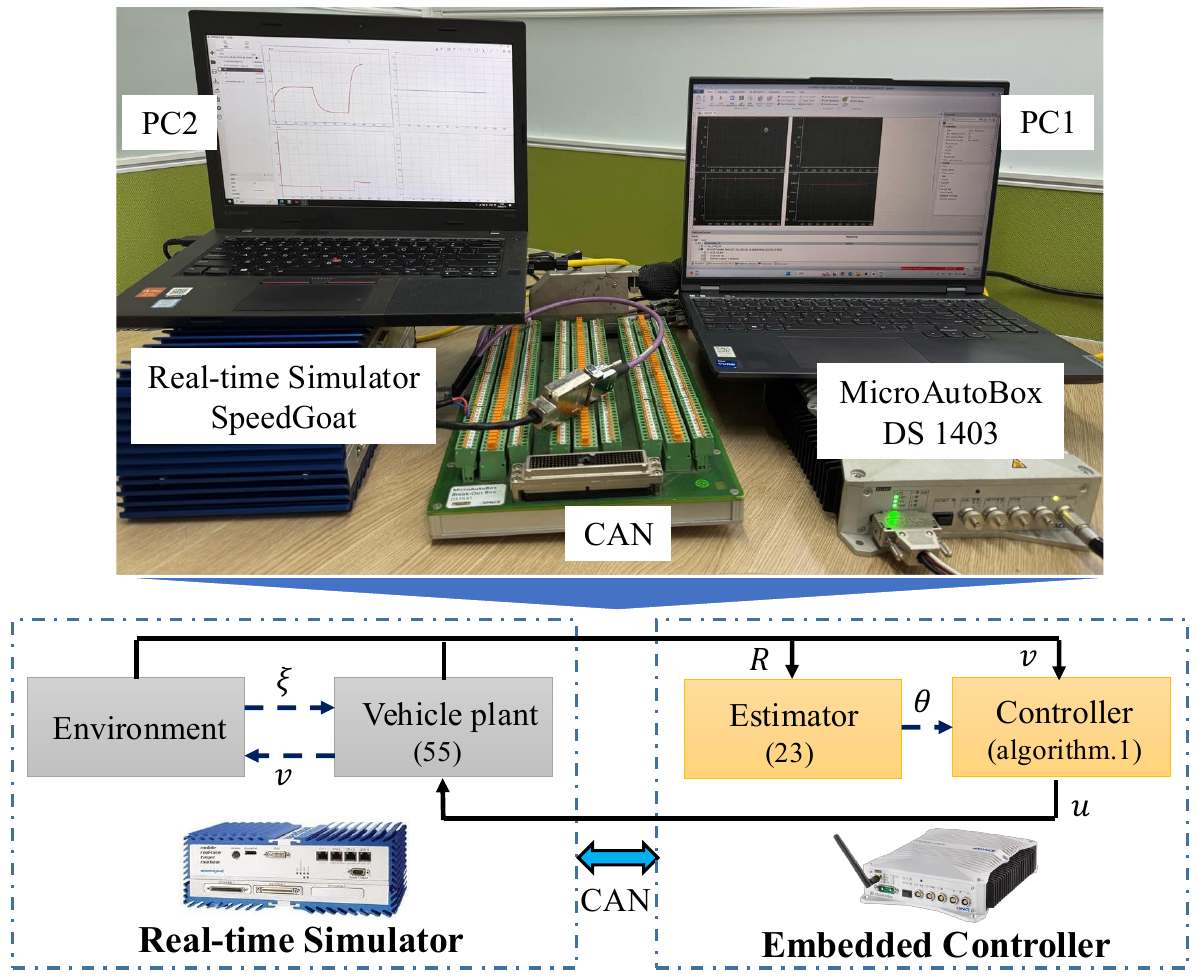}
    \caption{The HiL experiment platform.}
    \label{fig:HiL_Platform}
\end{figure}

\begin{figure}[t]
    \centering
    \includegraphics[width=\linewidth]{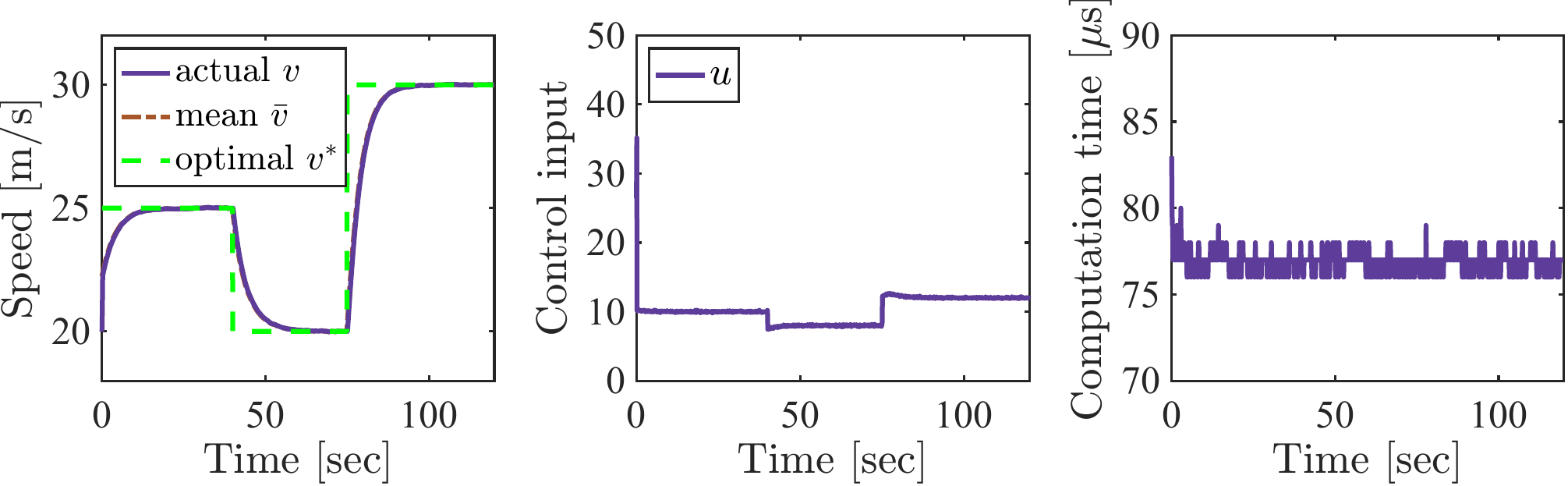}
    \caption{The experiment result of HiL under the changing driving condition. The max/average computational times are \(83~\mu\mathrm{s}\) and \(77~\mu\mathrm{s}\).}
    \label{fig:HiL_result}
\end{figure}


\section{Conclusion}
\label{sec:conclusion}

This paper proposes a structure-exploiting numerical dual control for exploration and exploitation (DCEE) method for auto-optimization control in unknown environments. 
The main finding is that the DCEE  has an inherent convex-over-nonlinear structure, in which the exploitation and exploration terms jointly form a unified nonlinear residual mapping equipped with a convex quadratic outer loss. 
By exploiting this structure, the DCEE problem is directly solved in the physical input space through a sequential convex programming procedure, where each inner iteration reduces to a Gauss-Newton least-squares step.
The resulting generalized Gauss-Newton Hessian approximation is positive semidefinite and requires only first-order derivatives. 
In the vehicle eco-cruising case study, compared with classical DCEE, the proposed method reduces the terminal speed error, cumulative speed error, and cumulative regret by \(83.5\%\), \(66.7\%\), and \(84.3\%\), respectively. 
Compared with direct numerical solution using the mature nonlinear programming solver \texttt{IPOPT}, the proposed numerical method improves the average computation speed by approximately \(9.63\) times and the worst-case computation speed by approximately \(9.41\) times. 
The hardware-in-the-loop results further verify the real-time feasibility of the proposed method under varying driving conditions, achieving a microsecond-level maximum computation time of only \(83~\mu\mathrm{s}\) on a vehicle embedded CPU.
Future work will focus on constrained DCEE to handle input and state constraints. 
Another important direction is to develop closed-loop stability and convergence analysis for numerical DCEE framework.


\bibliographystyle{IEEEtran}

\bibliography{global.bib}   

\begin{IEEEbiography}
[{\includegraphics[width=1in,height=1.25in,clip,keepaspectratio]{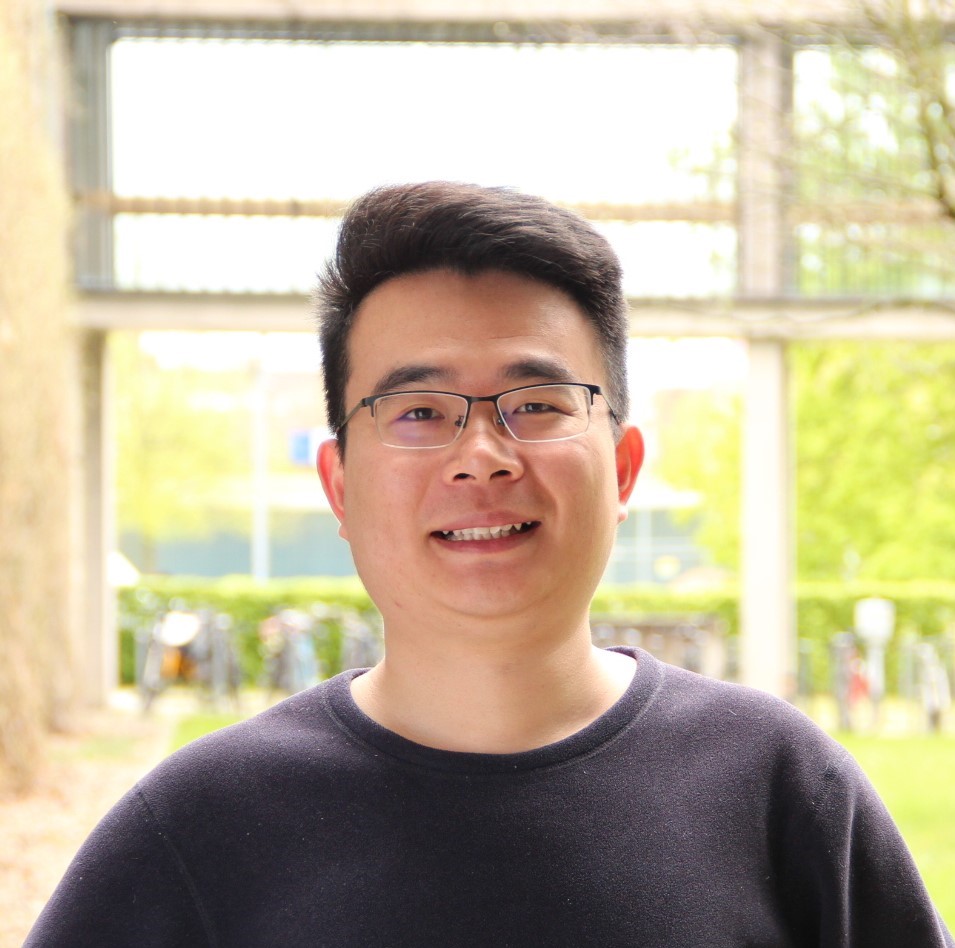}}]{Shiying Dong} received the B.S. degree in automation and the Ph.D. degree in control science and engineering from Jilin University, Changchun, China, in 2017 and 2024, respectively. He has been a joint Ph.D. student with the Systems Control and Optimization Laboratory, University of Freiburg, Germany, from 2022 to 2023. He was a Research Fellow with the Department of Electrical and Computer Engineering, National University of Singapore. He is currently a Postdoctoral Fellow with the Department of Aeronautical and Aviation Engineering, The Hong Kong Polytechnic University. His main research interest includes numerical optimization and learning-based predictive control, and their applications in autonomous vehicles.
\end{IEEEbiography}
\vspace{0.2mm}

\begin{IEEEbiography}
    [{\includegraphics[width=1in,height=1.25in,keepaspectratio]{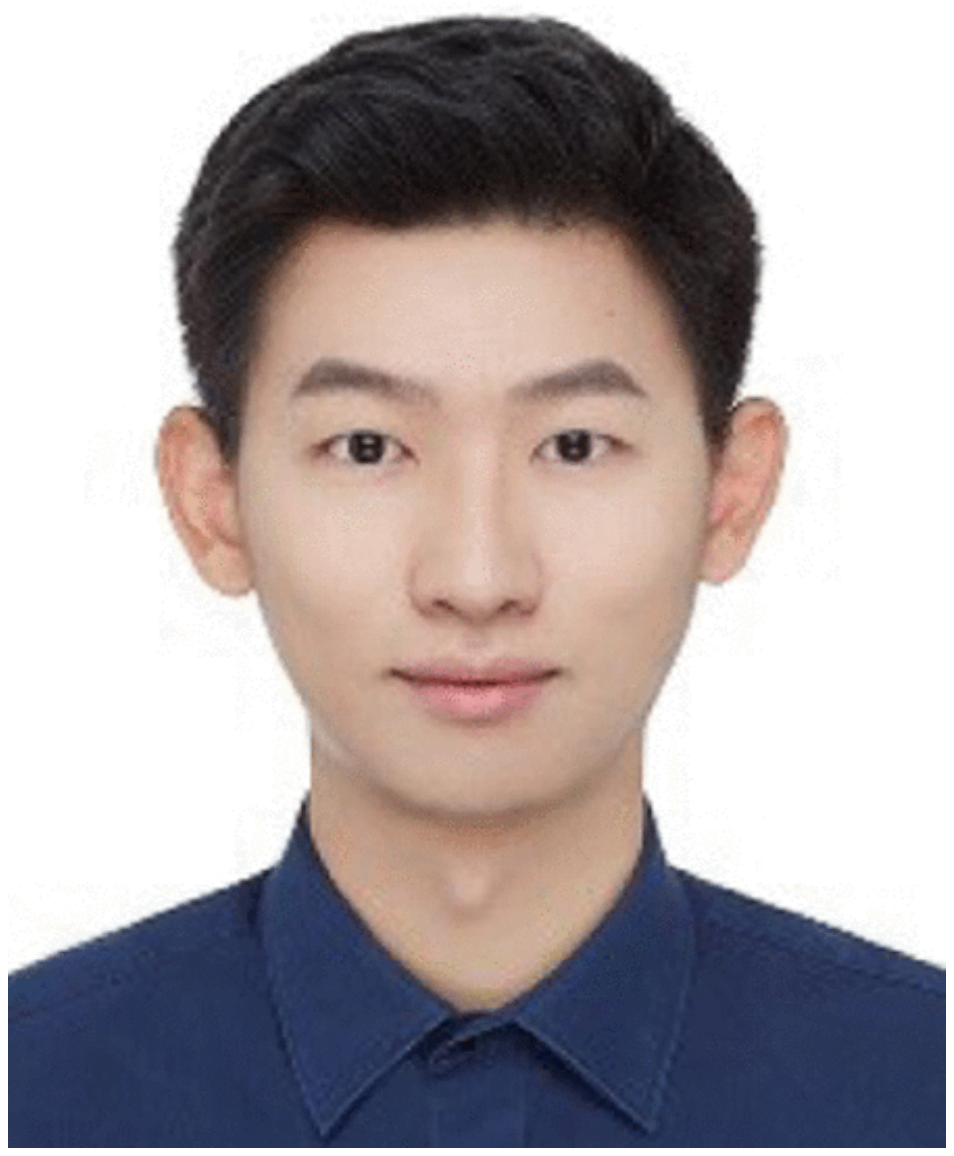}}]{Haoyang Yang} (IEEE Member) received the B.Eng. degree in measurement and control technique and instruments from Harbin Institute of Technology, Harbin, China, in 2017, and the Ph.D. degree in control science and engineering with the specialization in guidance and control from Beihang University, Beijing, China in 2023. He is currently a Research Assistant Professor in the Department of Aeronautical and Aviation Engineering at The Hong Kong Polytechnic University. Prior to this, he was a Postdoctoral Fellow in the same department and a Research Fellow at the University of Warwick, U.K. His research interests include safety-critical control, intelligent control, and their applications in unmanned systems.
\end{IEEEbiography}
\vspace{0.2mm}

\begin{IEEEbiography}
[{\includegraphics[width=1in,height=1.25in,clip,keepaspectratio]{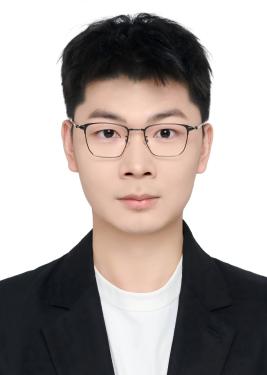}}]{Qiwei Liu} received the B.Sc. degree in automation, the Ph.D. degree in control science and engineering from East China University of Science and Technology, Shanghai, China, in 2021 and 2025, respectively. He is currently a Postdoctoral Fellow with the Department of Aeronautical and Aviation Engineering, The Hong Kong Polytechnic University. His research interests include neural networks, multi-agent systems and reinforcement learning.
\end{IEEEbiography}
\vspace{0.2mm}

\begin{IEEEbiography}
[{\includegraphics[width=1in,height=1.25in,clip,keepaspectratio]{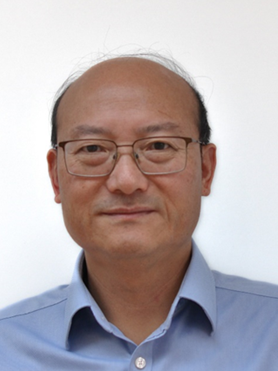}}]{Wen-Hua Chen}
(Fellow, IEEE) is currently the Chair Professor of Robotics and Autonomous Systems, the Director of the Research Centre for Low Altitude Economy, The Hong Kong Polytechnic University, Hong Kong, and the Chair of Autonomous Vehicles with the Department of Aeronautical and Automotive Engineering, Loughborough University, Loughborough, U.K. He is also a Chartered Engineer and a fellow of the Institution of Mechanical Engineers and the Institution of Engineering and Technology, U.K. He holds a five-year EPSRC Established Career Fellowship. He has authored or co-authored 350 articles and two books. He has considerable experience in control, signal processing, and artificial intelligence, and their applications in robotics, aerospace, and automotive systems. He has contributed significantly to a number of areas, including disturbance-observer-based control, model predictive control, and autonomous aerial vehicles.
\end{IEEEbiography}

\end{document}